\documentclass[twoside,11pt]{article}

\usepackage[preprint]{jmlr2e}
\usepackage{amsmath,amssymb}
\usepackage{booktabs}
\usepackage{algorithm}
\usepackage{algpseudocode}
\usepackage{enumitem}
\usepackage{lastpage}
\usepackage{graphicx}
\usepackage{array}
\usepackage{multirow}
\usepackage{url}
\usepackage{tabularx}

\numberwithin{equation}{section}

\newcommand{\softmax}{\operatorname{softmax}}
\newcommand{\KL}{\operatorname{KL}}
\newcommand{\CE}{\operatorname{CE}}
\newcommand{\E}{\mathbb{E}}

\newcommand{\ddorm}{\textbf{DDO-RM}}

\newcommand{\takeaway}[2]{%
  \par\smallskip
  \noindent\textbf{#1.} #2\par\smallskip
}

\newtheorem{proposition}{Proposition}
\newtheorem{remark}{Remark}

\jmlrheading{0}{2026}{1--\pageref{LastPage}}{}{}{00-0000}{Tiantian Zhang and Jierui Zuo}
\ShortHeadings{DDO-RM for LLM Preference Optimization}{Zhang and Zuo}
\firstpageno{1}

\title{DDO-RM: Distribution-Level Policy Improvement after Reward Learning}

\author{\name Tiantian Zhang \email t.zhang8@columbia.edu \\
       \addr Department of Computer Science \\
       Columbia University, New York, NY 10027, USA
       \AND
       \name Jierui Zuo \email zuojr22@mails.tsinghua.edu.cn \\
       \addr Department of Management Science and Engineering \\
       Tsinghua University, Beijing, China
       \AND
       \name Michael Chen\email yc4131@columbia.edu \\
       \addr Department of Computer Science \\
       Columbia University, New York, NY 10027, USA
       \AND
       \name Wenping Wang \email wenpingw@alumni.cmu.edu \\
       \addr Department of Computer Science \\
       Carnegie Mellon University, Pittsburgh, PA 15213, USA}
\editor{}

\begin{document}

\maketitle

\begin{abstract}
Recent theory on the RLHF--DPO dichotomy shows that reward-model-first methods and direct policy-learning methods can differ fundamentally under representation and finite-sample constraints. In particular, when the reward function is statistically simpler than the induced autoregressive policy, explicitly learning a reward model can be more sample-efficient than directly fitting the policy. Motivated by this view, we study the second stage of reward-model-based preference optimization: how should a learned reward model be converted into a policy update?

We propose \textbf{DDO-RM}, a finite-candidate decision-optimization method that converts reward-model scores into an explicit target distribution over candidate responses. Instead of optimizing only a pairwise preference objective or applying stochastic policy-gradient updates, DDO-RM performs a KL-regularized mirror-descent update on the candidate decision distribution. For each prompt, the method centers reward scores under the current policy, constructs a Boltzmann-style target distribution, and distills this target back into the language model.

This reframes preference optimization as \emph{distribution-level decision optimization}: DPO directly fits the policy from preference pairs, PPO-based RLHF optimizes sampled reward gradients after reward learning, while DDO-RM explicitly projects the policy toward the reward-improved distribution over a finite candidate set. As a preliminary empirical reference, we evaluate DDO-RM on \texttt{EleutherAI/pythia-410m} with \texttt{HuggingFaceH4/ultrafeedback\_binarized} and compare against DPO on the held-out \texttt{test\_prefs} split. DDO-RM improves mean pair accuracy from 0.5238 to 0.5602, AUC from 0.5315 to 0.5382, and mean margin from 0.1377 to 0.5353 across three seeds. These results are encouraging but still preliminary: the study covers one model family, one dataset, one held-out split, and three seeds. The main contribution is a framework connecting reward-model learning, finite-candidate decision distributions, and mirror-descent policy improvement.
\end{abstract}

\paragraph{Keywords.}
LLM alignment; preference optimization; RLHF; DPO; reward model; decision distribution; mirror descent; finite-candidate optimization.

\section{Introduction}

Preference optimization has become a standard stage in large language model alignment. A common setup presents a prompt together with a chosen and a rejected response, then trains a policy so that the preferred response receives a higher score. Direct Preference Optimization (DPO) is influential because it turns preference learning into a stable direct policy objective without running an explicit online reinforcement-learning loop.

This paper takes a different starting point. We view preference optimization as having two logically distinct steps:
\begin{enumerate}[leftmargin=*]
    \item infer reward information from preference data;
    \item use that reward information to improve the policy.
\end{enumerate}
DPO combines these two steps into direct policy fitting. Classical reward-model-based RLHF separates them: first learn a reward model, then optimize the policy using PPO-style reinforcement learning. Recent work on the RLHF--DPO performance gap shows that neither route is universally superior: RLHF, DPO, or online DPO can outperform one another depending on representation and finite-sample conditions. Crucially for our purposes, that work also identifies regimes where reward learning can be statistically more efficient than direct policy learning, especially when the reward function is simpler or more sparse than the induced autoregressive policy.

Motivated by this reward-model-first perspective, we ask a narrower question: once a reward model is available, can the policy-improvement stage be made more direct and distribution-aware than PPO? We answer this question through \ddorm{}, a finite-candidate decision-optimization method. For each prompt, DDO-RM takes a candidate response set, evaluates candidates with a reward model, constructs a reward-improved target distribution over the candidates, and distills that distribution into the policy.

\takeaway{Positioning}{We build on recent theory separating reward-model-first and direct-policy learning. Our contribution is not to re-prove that dichotomy, but to study the second stage of the reward-model-first pipeline: \emph{how should a learned reward model be converted into a policy update?} We propose DDO-RM, a finite-candidate mirror-descent policy-improvement operator that explicitly constructs a reward-improved decision distribution and distills it into the model.}

The core distinction is not merely algorithmic. DPO optimizes a pairwise comparison landscape. PPO-RLHF optimizes expected reward through sampled stochastic gradients. DDO-RM instead optimizes an explicit \emph{decision distribution} over the candidate set. In this sense, DDO-RM is best understood as a distribution-level policy-improvement operator for the reward-model-first alignment pipeline, rather than simply as another pairwise preference loss.

\paragraph{Contributions.}
This manuscript makes four focused contributions.
\begin{enumerate}[leftmargin=*]
    \item \textbf{Formulation.} We formulate reward-model-based preference optimization as finite-candidate decision optimization over a prompt-conditioned candidate set.
    \item \textbf{Method.} We propose DDO-RM, which turns learned reward scores into a KL-regularized reward-improved target distribution and distills this target into the policy.
    \item \textbf{Geometric clarification.} We identify DDO-RM as an instantiation of classical KL mirror descent on a finite candidate simplex, clarifying how reward-model scores induce a conservative policy-improvement target.
    \item \textbf{Preliminary benchmark.} We provide a reproducible held-out benchmark against DPO on Pythia-410M / UltraFeedback Binarized, while explicitly identifying PPO-RLHF and GRPO as the more direct next comparators for reward-model-based settings.
\end{enumerate}

The code, configurations, outputs, and plotting scripts for the current benchmark are available at:
\begin{center}
\url{https://github.com/zuojr/ddorm-llm-preference-benchmark}
\end{center}

\paragraph{Scope of this paper.}
The manuscript is intentionally narrow. It does not claim that reward-model-first learning is always superior to direct preference optimization. It also does not claim that the current DPO comparison is the final apples-to-apples baseline for a reward-model method. The most direct future comparator for DDO-RM is PPO-based RLHF using the same reward model. DPO remains important, but it is a direct-policy baseline rather than a same-information reward-model baseline.

\takeaway{Introduction takeaway}{DDO-RM is not proposed as another pairwise preference loss. It is a policy-improvement operator for the reward-model-first pipeline.}

\section{Background: RLHF, DPO, and the Reward-Model Dichotomy}

\subsection{RLHF as reward learning plus policy improvement}

In standard reward-model-based RLHF, one first trains a scalar reward model $\hat r(x,y)$ from human preference comparisons. The policy is then improved by optimizing a KL-regularized objective of the form
\begin{equation}
    \max_{\pi} \; \E_{x,y\sim \pi(\cdot\mid x)}[\hat r(x,y)] - \lambda \, \KL\bigl(\pi(\cdot\mid x)\,\|\,\pi_{\rm ref}(\cdot\mid x)\bigr).
\label{eq:rlhf}
\end{equation}
In practice, this second step is often optimized by PPO or a related policy-gradient method. The reward model is noisy and only learned from comparisons, but it provides a scalar signal that can be reused across generated candidates.

\subsection{DPO as direct policy fitting}

DPO avoids a separately optimized RL loop. For a chosen response $y^+$ and rejected response $y^-$, the standard DPO loss is
\begin{equation}
\mathcal{L}_{\mathrm{DPO}}(x,y^+,y^-)
=
-\log \sigma\!\Bigl(
\beta \bigl[
\log \pi_\theta(y^+\mid x)-\log \pi_\theta(y^-\mid x)
-\log \pi_{\mathrm{ref}}(y^+\mid x)+\log \pi_{\mathrm{ref}}(y^-\mid x)
\bigr]
\Bigr),
\label{eq:dpo}
\end{equation}
where $\pi_{\mathrm{ref}}$ is the reference policy. DPO is a strong baseline because it uses the pairwise supervision signal directly and avoids the variance and engineering complexity of online RL.

\subsection{Why reward-model-first can still matter}

The RLHF--DPO dichotomy suggests that the choice between direct policy learning and reward-model-first learning depends on representation and finite-sample structure. If the reward function is easier to represent or estimate than the induced optimal autoregressive policy, then learning the reward first can be statistically advantageous. However, if the reward model class is mis-specified, direct policy learning can be preferable. This paper builds on the reward-model-first side of that dichotomy but does not claim that reward modeling is always superior.

\paragraph{How we use the dichotomy theory.}
We use the RLHF--DPO dichotomy as motivation rather than as a theorem that directly proves DDO-RM superiority. The strongest separations in that theory are based on stylized constructions, and their generality depends on assumptions about the reward class, policy class, and parameterization. Our paper only relies on the weaker and more robust lesson: reward-model-first learning is a meaningful regime, and in some settings the learned reward can be a statistically simpler object than the induced autoregressive policy. DDO-RM then studies a different question: once such a reward model is available, how should it be converted into a policy update?

Our contribution is therefore focused on the \emph{second} stage: given a learned reward model, DDO-RM replaces noisy sampled policy-gradient improvement with an explicit finite-candidate distribution update.

\takeaway{Background takeaway}{The dichotomy motivates reward-model-first learning in some regimes, but it does not prescribe the best second-stage policy update. DDO-RM addresses that second-stage question.}

\section{Related Work and Novelty Clarification}
\label{sec:related}

\subsection{Reward-model-first RLHF and PPO}
Classical RLHF separates preference learning into reward modeling and policy optimization. A reward model is trained from comparison data and is then used to optimize a KL-regularized policy objective, commonly with PPO or related policy-gradient methods \cite{ouyang2022training,ppo}. This two-stage route is computationally more involved than direct preference objectives, but it has an important conceptual advantage: the reward model is a reusable scalar evaluator of generated candidates. Recent theory on the RLHF--DPO gap further shows that this separation can be statistically advantageous when the reward function is easier to estimate than the induced optimal autoregressive policy \cite{rlhfdpo}. DDO-RM builds on this reward-model-first route, but focuses on the second stage: how to convert the learned reward into a policy update.

\paragraph{Scope of the borrowed theory.}
The RLHF--DPO dichotomy is used here as a positioning result, not as a black-box proof of our method. Its exact separation results are deliberately stylized and depend on particular reward and policy classes. We therefore avoid claiming that the dichotomy universally favors reward-model-first learning. Instead, it motivates the setting in which DDO-RM is most relevant: when a reward model is learned or available, and the remaining algorithmic question is whether policy improvement should be done by sampled PPO-style gradients or by an explicit finite-candidate distribution update.

\takeaway{Theory-use takeaway}{We borrow the RLHF--DPO dichotomy only as motivation for reward-model-first alignment. DDO-RM's own contribution is the second-stage policy-improvement operator, not a new proof of the dichotomy or a universal claim that reward models always dominate direct policy learning.}

\subsection{Direct preference optimization and pairwise variants}
DPO eliminates explicit reward-model training and online RL by directly optimizing the policy from chosen--rejected pairs \cite{dpo}. This has made DPO a strong and widely used direct-preference baseline. Several later methods modify the direct-preference objective, for example by using prospect-theoretic utilities in KTO \cite{kto}, odds-ratio penalties in ORPO \cite{orpo}, or reference-free average-log-probability rewards in SimPO \cite{simpo}. These methods are closest to DDO-RM in the broad goal of preference alignment, but they differ in information structure: they primarily update the policy from pairwise or binary preference signals, whereas DDO-RM assumes an explicit reward model and constructs a full target distribution over a finite candidate set.

\subsection{Group-relative and verifiable-reward optimization}
GRPO and related group-relative methods are most natural when multiple completions are generated for the same prompt and scored by a verifier or reward function \cite{grpo}. This makes GRPO a closer comparator to DDO-RM than DPO in multi-candidate or reasoning settings. The key difference is algorithmic geometry. GRPO uses group-relative reward normalization inside a policy-gradient style update, while DDO-RM explicitly constructs the reward-improved finite-candidate distribution and distills that distribution into the policy.

\subsection{Mirror descent, exponentiated gradients, and KL policy improvement}
The KL update used in DDO-RM is not new as an optimization primitive. Entropy mirror descent, exponentiated-gradient updates, and KL-regularized policy-improvement steps are classical tools in online learning, convex optimization, and reinforcement learning \cite{nemirovsky1983problem,beck2003mirror,kivinen1997exponentiated}. In the finite-candidate simplex, the solution of a linearized reward objective plus a KL penalty has the Boltzmann form
\[
q_i \propto p_i \exp(\eta r_i).
\]
The novelty of DDO-RM is therefore not the proximal operator itself. The novelty is its role as a finite-candidate policy-improvement layer for reward-model-first LLM alignment: learned candidate-level rewards are converted into an explicit decision distribution, and the language model is trained to match that distribution.

\subsection{What is new and what is inherited}
Table~\ref{tab:novelty} summarizes the intended novelty boundary. We make this boundary explicit because otherwise the method can be mistaken for a claim that KL mirror descent or Boltzmann policy improvement is new.

\begin{table}[H]
\centering
\caption{Novelty boundary. DDO-RM uses a classical KL mirror-descent operator, but applies it as the second-stage policy-improvement map in reward-model-first LLM alignment.}
\label{tab:novelty}
\small
\begin{tabularx}{\linewidth}{>{\raggedright\arraybackslash}p{0.34\linewidth}X}
\toprule
Component & Status in this paper \\
\midrule
Reward modeling from human preferences & inherited from the RLHF pipeline \\
DPO-style direct pairwise policy fitting & baseline / related work \\
KL mirror descent and exponentiated-gradient update on the simplex & classical optimization tool, not claimed as new \\
Finite-candidate decision distribution for LLM post-training & proposed formulation \\
Using a learned reward model to construct a reward-improved target distribution over candidates & proposed DDO-RM policy-improvement step \\
Distilling the reward-improved distribution into the language model & proposed training interface \\
Same-RM comparison against PPO-RLHF and GRPO & future fair empirical test beyond the current minimal benchmark \\
\bottomrule
\end{tabularx}
\end{table}

\takeaway{Related-work takeaway}{DDO-RM does not claim that KL mirror descent is new. Its contribution is to use this classical operator as an explicit finite-candidate policy-improvement layer after reward learning, in contrast to pairwise direct objectives and sampled PPO-style updates.}

\section{Finite-Candidate Decision Setup}

Let $x$ denote a prompt and let $\mathcal{Y}(x)=\{y_1,\dots,y_K\}$ denote a finite candidate set. A policy model assigns a scalar score
\[
 s_i = s_\theta(x,y_i), \qquad i=1,\dots,K,
\]
which in the current implementation is the average token log-probability of the candidate response under the policy. These scores induce a temperature-controlled decision distribution
\begin{equation}
 p_i = \softmax(s/\tau)_i = \frac{\exp(s_i/\tau)}{\sum_{j=1}^K \exp(s_j/\tau)}.
\label{eq:policydist}
\end{equation}

In the public benchmark used in this paper, each prompt has only two candidates,
\[
\mathcal{Y}(x)=\{y^+,y^-\},
\]
where $y^+$ is the chosen response and $y^-$ is the rejected response. The DDO-RM formulation, however, is written for arbitrary finite $K$, which makes it more natural for reranking, recommendation, listwise preference learning, and candidate-based reasoning than a purely pairwise objective.

\section{DDO-RM: Distribution-Level Policy Improvement}
\label{sec:ddorm}

DDO-RM starts from the current policy distribution $p$ over the candidate set and introduces a reward-model score
\[
 r_i = \hat r(x,y_i)
\]
for each candidate. Define the policy-weighted average reward and centered reward by
\begin{equation}
\bar r = \sum_{j=1}^K p_j r_j,
\qquad
\tilde r_i = r_i - \bar r.
\label{eq:centeredreward}
\end{equation}
The centered form is shift-invariant and subtracts the reward baseline under the current policy. DDO-RM then forms an updated score vector
\begin{equation}
 s'_i = s_i + \eta \, \tilde r_i,
\label{eq:ddo-step}
\end{equation}
where $\eta>0$ is the decision step size. The reward-guided target distribution is
\begin{equation}
 q_i = \softmax(s'/\tau)_i
= \frac{\exp\bigl((s_i+\eta\tilde r_i)/\tau\bigr)}{\sum_{j=1}^K \exp\bigl((s_j+\eta\tilde r_j)/\tau\bigr)}.
\label{eq:qdist}
\end{equation}
Finally, the policy is trained to match this target distribution with a cross-entropy objective,
\begin{equation}
\mathcal{L}_{\mathrm{DDO\mbox{-}RM}}(x,\mathcal{Y}(x))
= \CE(q,p_\theta)
= -\sum_{i=1}^K q_i \log p_i.
\label{eq:ddoloss}
\end{equation}

\takeaway{Method takeaway}{DDO-RM turns reward scores into a full decision distribution. The update is not merely ``choose the highest-reward response''; it is a conservative KL projection from the current policy toward a reward-improved finite-candidate distribution.}

\begin{proposition}[KL policy-improvement interpretation]
For fixed current distribution $p\in\Delta_K$ and reward vector $r\in\mathbb{R}^K$, the DDO-RM target distribution in Equation~\eqref{eq:qdist} is equivalent to
\begin{equation}
q
= \arg\max_{u\in\Delta_K}
\left\{
\langle u,r\rangle - \frac{\tau}{\eta}\KL(u\|p)
\right\}.
\label{eq:kl-proj}
\end{equation}
\end{proposition}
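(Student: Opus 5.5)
The plan is to compute both sides in closed form and check that they agree. I will first rewrite the target distribution of Equation~\eqref{eq:qdist} in terms of $p$ and $r$ alone. Then I will solve the variational problem in Equation~\eqref{eq:kl-proj} explicitly, using the Gibbs variational principle (equivalently, a Lagrange-multiplier computation on the simplex), and compare the two expressions.

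\emph{Step 1 (rewriting $q$).} Since $p=\softmax(s/\tau)$ by Equation~\eqref{eq:policydist}, we have $\exp(s_i/\tau)\propto p_i$. Substituting into Equation~\eqref{eq:qdist} gives
\[
q_i \propto \exp(s_i/\tau)\exp(\eta\tilde r_i/\tau) \propto p_i\exp(\eta\tilde r_i/\tau).
\]
Because $\tilde r_i = r_i-\bar r$ and $\bar r$ does not depend on $i$, the factor $\exp(-\eta\bar r/\tau)$ cancels in the normalization. Hence $q_i = p_i e^{\eta r_i/\tau}/Z$ with $Z=\sum_j p_j e^{\eta r_j/\tau}$. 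This also shows the centering is irrelevant to the target, consistent with the remark that it is shift-invariant. Note that $p$ has full support, being a softmax, so $\KL(u\|p)$ is finite for every $u\in\Delta_K$.

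\emph{Step 2 (solving the variational problem).} Set $c=\tau/\eta>0$ and define $F(u)=\langle u,r\rangle - c\,\KL(u\|p)$. The key identity, obtained by expanding $\log(u_i/q_i)=\log(u_i/p_i)-r_i/c+\log Z$ and summing against $u$, is
\[
F(u) = c\log Z - c\,\KL(u\|q) \qquad \text{for all } u\in\Delta_K .
\]
By Gibbs' inequality, $\KL(u\|q)\ge 0$ with equality iff $u=q$. Therefore $q$ is the unique maximizer of $F$ over $\Delta_K$, and the optimal value is $c\log Z$. This establishes Equation~\eqref{eq:kl-proj}.

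As a cross-check, one can use a Lagrangian with multiplier $\lambda$ for $\sum_i u_i=1$. Stationarity gives $r_i - c(\log(u_i/p_i)+1)-\lambda=0$, i.e.\ $u_i\propto p_i e^{r_i/c}$. Strict concavity of $F$, which follows from strict convexity of $u\mapsto\KL(u\|p)$, gives uniqueness. The entropy term's infinite slope at the boundary rules out boundary maximizers, so the nonnegativity constraints are inactive.

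\emph{Main obstacle.} There is no deep difficulty; the delicate point is bookkeeping of the constants. One must check that the temperature and step size combine to exactly $\eta/\tau$ in the exponent, matching the regularization weight $\tau/\eta$. One must also handle the boundary of the simplex, which the identity in Step 2 does cleanly, with the convention $0\log 0=0$ and using full support of $p$.
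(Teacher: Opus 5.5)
Your proof is correct, but your main step differs from the paper's. The paper argues only through the first-order condition. It writes the stationarity equation $r_i - \frac{\tau}{\eta}\log\frac{u_i}{p_i} - \alpha = 0$, with the constant $+1$ absorbed into the multiplier $\alpha$, and reads off $u_i\propto p_i\exp(\eta r_i/\tau)$. It then rewrites this as $u_i\propto\exp((s_i+\eta r_i)/\tau)$ and notes that centering only rescales all unnormalized weights by a common factor. That is exactly your ``cross-check'', run in the opposite order: you first reduce $q$ to the form $p_i e^{\eta r_i/\tau}/Z$, then solve the variational problem.

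Your primary argument instead completes the KL through the exact identity $F(u)=c\log Z - c\,\KL(u\|q)$. Gibbs' inequality then gives, in one step, global optimality, uniqueness, the optimal value $c\log Z$, and the handling of points on the boundary of $\Delta_K$. The paper's stationarity argument leaves all of these implicit. It never checks that the critical point is a maximizer, which needs concavity. It never checks that the nonnegativity constraints are inactive. It also tacitly uses full support of $p$ so that the logarithms are defined, which holds here because $p=\softmax(s/\tau)$.

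The paper's route is shorter and makes the exponentiated-gradient form fall straight out of the optimality condition. Yours is the more complete proof, and it needs no separate interiority or concavity discussion.
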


\begin{proof}
The first-order optimality condition for Equation~\eqref{eq:kl-proj} gives
\[
 r_i - \frac{\tau}{\eta}\log\frac{u_i}{p_i} - \alpha = 0,
\]
where $\alpha$ is the simplex normalization multiplier. Thus
\[
 u_i \propto p_i\exp(\eta r_i/\tau).
\]
Since $p_i\propto \exp(s_i/\tau)$, this is equivalent to
\[
 u_i \propto \exp((s_i+\eta r_i)/\tau).
\]
Replacing $r_i$ by the centered reward $\tilde r_i=r_i-\bar r$ only multiplies all unnormalized probabilities by the same constant and therefore leaves the normalized distribution unchanged. Hence $u=q$.
\end{proof}

This proposition identifies DDO-RM as a KL-regularized policy-improvement operator on the candidate simplex. It should be read as a geometric clarification rather than as a claim that the KL update itself is new.

\begin{remark}[Classical operator, new alignment role]
The KL-regularized update in Proposition~1 is a classical entropy mirror-descent or exponentiated-gradient step on the simplex. We do not claim novelty in the proximal operator itself. The contribution of DDO-RM is its role in the reward-model-first alignment pipeline: after reward learning, DDO-RM uses this operator to convert learned candidate-level rewards into an explicit reward-improved decision distribution, and then distills that distribution into the policy. Thus the novelty is not the KL update alone, but the finite-candidate policy-improvement formulation for LLM preference optimization.
\end{remark}

\subsection{Algorithm and pipeline}

\begin{algorithm}[H]
\caption{DDO-RM training on a finite candidate set}
\label{alg:ddorm}
\begin{algorithmic}[1]
\Require prompt $x$, candidate set $\mathcal{Y}(x)=\{y_1,\dots,y_K\}$, policy model $s_\theta$, reward model $\hat r$, temperature $\tau$, step size $\eta$
\State Compute policy scores $s_i \leftarrow s_\theta(x,y_i)$ for all candidates.
\State Form policy distribution $p \leftarrow \softmax(s/\tau)$.
\State Compute reward scores $r_i \leftarrow \hat r(x,y_i)$.
\State Center rewards: $\tilde r_i \leftarrow r_i - \sum_j p_j r_j$.
\State Form target distribution $q \leftarrow \softmax((s + \eta \tilde r)/\tau)$.
\State Update $\theta$ by minimizing $\CE(q,p_\theta)$.
\end{algorithmic}
\end{algorithm}

\begin{figure}[H]
\centering
\fbox{\begin{minipage}{0.93\linewidth}
\centering
\begin{tabular}{@{} l @{\quad} p{0.72\linewidth} @{}}
\textbf{DPO} & preference pair $\to$ pairwise logistic policy update \\[4pt]
\textbf{PPO-RLHF} & preference data $\to$ reward model $\to$ sampled rollouts $\to$ stochastic policy-gradient update \\[4pt]
\textbf{DDO-RM} & preference data $\to$ reward model $\to$ candidate rewards $\to$ explicit KL / mirror-descent target $q$ $\to$ distillation \\
\end{tabular}
\end{minipage}}
\caption{Pipeline comparison. DDO-RM shares the reward-model-first information structure of PPO-RLHF, but replaces sampled policy-gradient improvement with an explicit finite-candidate distribution update.}
\label{fig:pipeline}
\end{figure}

\section{Relation to DPO, PPO-RLHF, and GRPO}

The most important repositioning is that DDO-RM should not be interpreted only as a DPO competitor. It is a reward-model-first policy-improvement method. Table~\ref{tab:method-compare} summarizes the relevant baselines.

\begin{table}[t]
\centering
\caption{Comparison of preference-optimization methods. DDO-RM is closest to PPO-RLHF in information structure because both use an explicit reward model. DPO is a direct-preference reference baseline.}
\label{tab:method-compare}
\small
\begin{tabularx}{\linewidth}{l c >{\raggedright\arraybackslash}p{0.18\linewidth} >{\raggedright\arraybackslash}p{0.28\linewidth} X}
\toprule
Method & RM? & Candidate structure & Update type & Comparison role \\
\midrule
DPO & No & Pair & Direct pairwise policy fitting & Direct-preference baseline \\
PPO-RLHF & Yes & Sampled responses & Stochastic policy-gradient update & Main reward-model baseline \\
GRPO & Yes / verifier & Group completions & Group-relative reward update & Multi-candidate or reasoning baseline \\
DDO-RM & Yes & Finite candidate set & Explicit KL / mirror-descent projection & Proposed reward-model policy-improvement method \\
\bottomrule
\end{tabularx}
\normalsize
\end{table}

This distinction matters for experimental design. The current benchmark against DPO is useful because DPO is the standard direct-preference reference. However, it is not the final same-information comparison: DPO does not train or use an explicit reward model, while DDO-RM does. The clean apples-to-apples reward-model comparison is DDO-RM versus PPO-based RLHF using the same reward model, same prompt distribution, and matched generation budget. In multi-completion or reasoning settings, GRPO is also a natural comparator.

\takeaway{Comparison takeaway}{DPO is a useful direct-preference reference, but PPO-RLHF is the fairest reward-model baseline for DDO-RM. GRPO becomes the natural comparator in multi-completion or verifiable-reward reasoning settings.}

\section{Experimental Setup}
\label{sec:setup}

\subsection{Benchmark configuration}

The current public benchmark uses the following configuration.

\begin{table}[t]
\centering
\caption{Minimal held-out benchmark configuration.}
\label{tab:setup}
\begin{tabular}{>{\raggedright\arraybackslash}p{0.33\linewidth}p{0.58\linewidth}}
\toprule
Item & Value \\
\midrule
Base model & \texttt{EleutherAI/pythia-410m} \\
Dataset & \texttt{HuggingFaceH4/ultrafeedback\_binarized} \\
Training splits used by scripts & \texttt{train\_sft}, \texttt{train\_prefs} \\
Evaluation split & \texttt{test\_prefs} \\
Seeds & 42, 13, 3407 \\
Compared methods & DPO, DDO-RM \\
Metrics & Pair accuracy, AUC, mean margin \\
Code and outputs & \url{https://github.com/zuojr/ddorm-llm-preference-benchmark} \\
\bottomrule
\end{tabular}
\end{table}

This is a held-out evaluation protocol rather than a broad ablation study. The repository keeps the training and plotting pipeline lightweight and focuses on reproducibility: code, configurations, logs, metrics JSONs, figures, and summary tables are public, while heavy checkpoints are intentionally excluded.

\subsection{Evaluation metrics}

For each example $j$ in the held-out pairwise split, let $c_j$ be the policy score of the chosen response and let $r_j$ be the policy score of the rejected response. Define the margin
\begin{equation}
 m_j = c_j - r_j.
\end{equation}
The three reported metrics are:
\begin{align}
\text{Pair Accuracy} &= \frac{1}{N}\sum_{j=1}^N \mathbf{1}[m_j > 0], \\
\text{AUC} &= \mathrm{ROC\mbox{-}AUC}\bigl([1,\dots,1,0,\dots,0],[c_1,\dots,c_N,r_1,\dots,r_N]\bigr), \\
\text{Mean Margin} &= \frac{1}{N}\sum_{j=1}^N m_j.
\end{align}
Pair accuracy measures strict pairwise correctness, AUC measures ranking quality over the concatenated chosen and rejected scores, and mean margin measures the average separation strength between preferred and dispreferred responses.

\section{Preliminary Results}
\label{sec:results}
\begin{figure}[t]
\centering
\begin{minipage}[t]{0.32\linewidth}
  \centering
  \includegraphics[width=\linewidth]{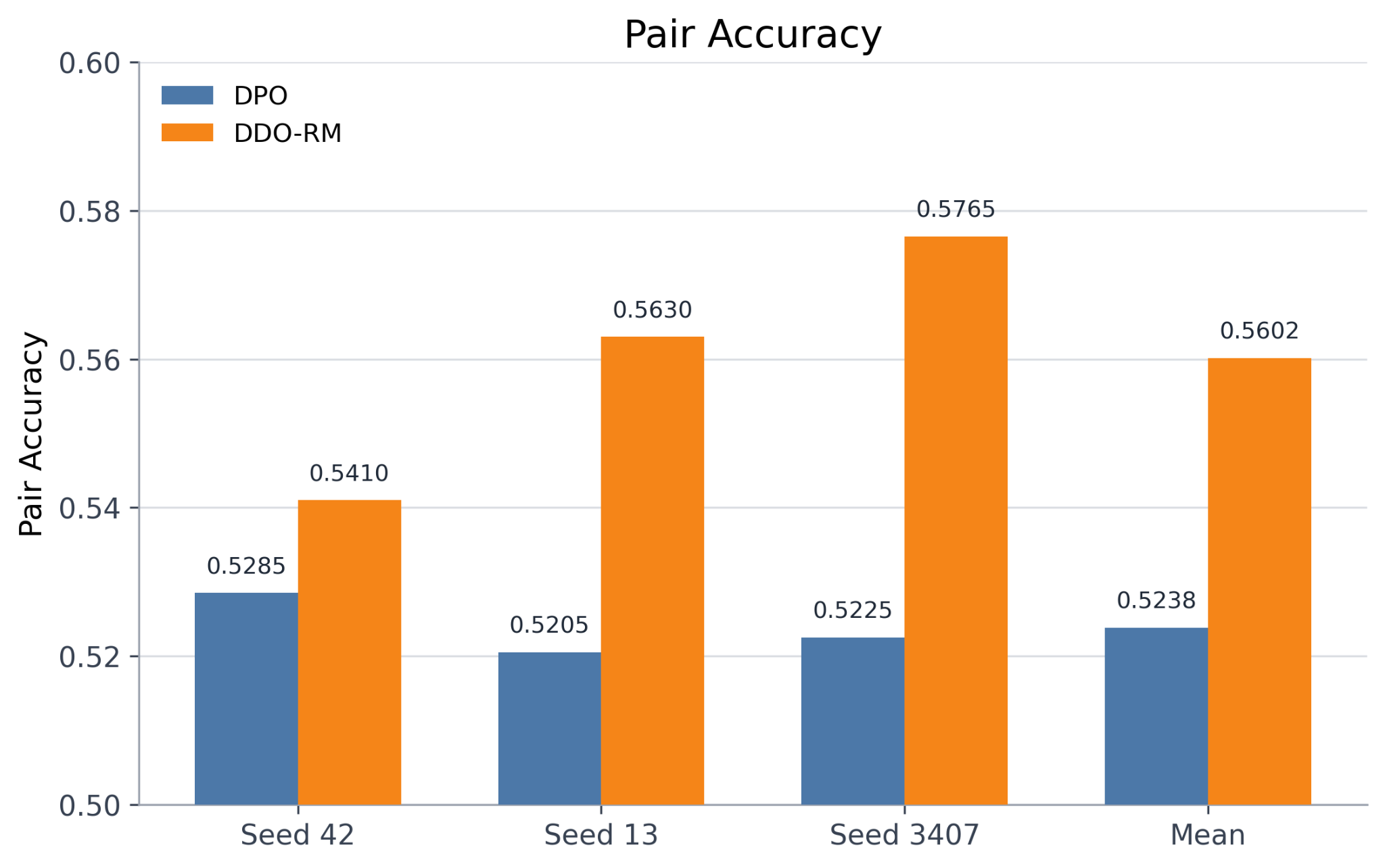}
\end{minipage}
\hfill
\begin{minipage}[t]{0.32\linewidth}
  \centering
  \includegraphics[width=\linewidth]{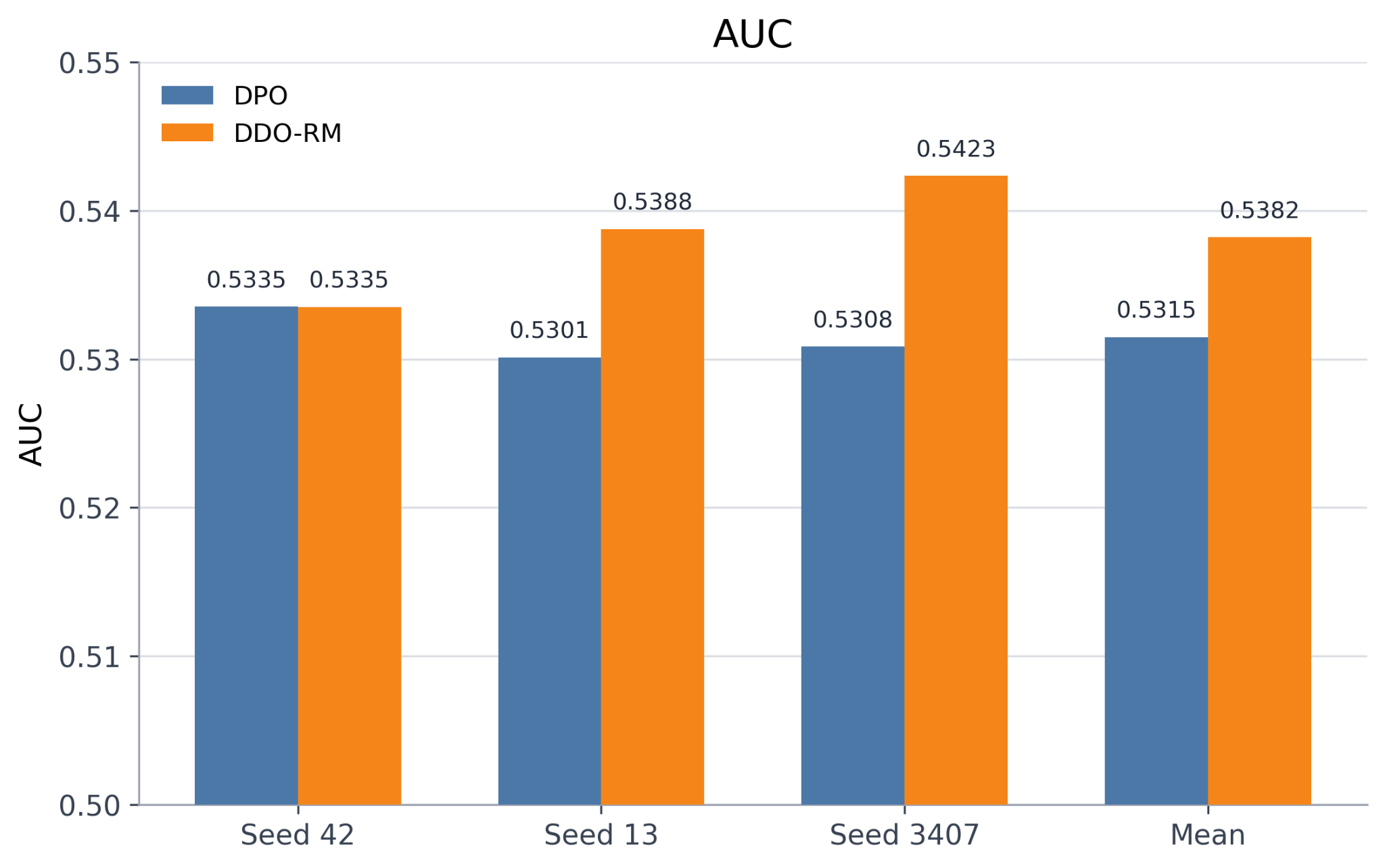}
\end{minipage}
\hfill
\begin{minipage}[t]{0.32\linewidth}
  \centering
  \includegraphics[width=\linewidth]{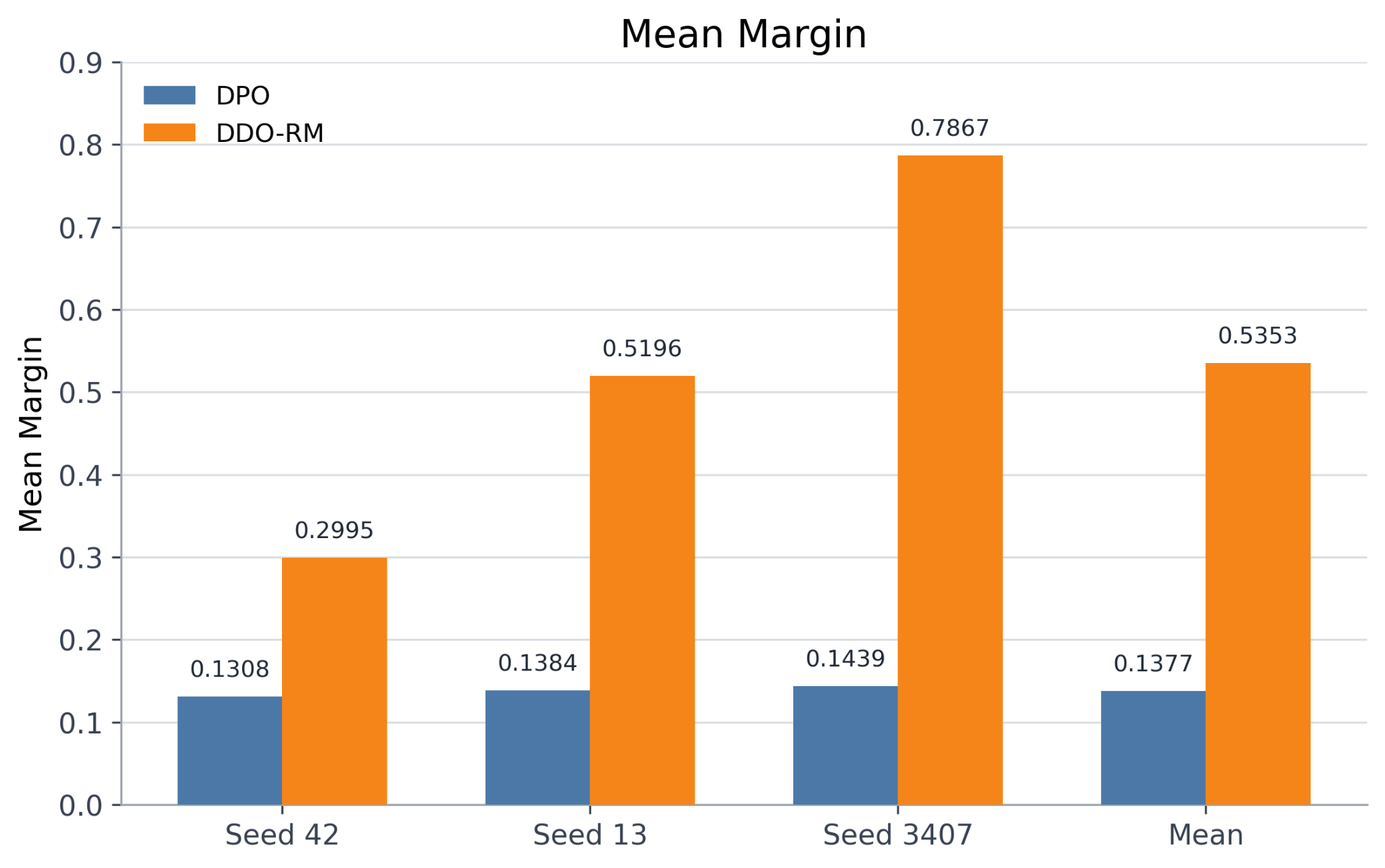}
\end{minipage}
\caption{Mean metric comparison between DPO and DDO-RM across three seeds. From left to right: pair accuracy, AUC, and mean margin. DDO-RM outperforms DPO on all three metrics, with the largest absolute gain on mean margin.}
\label{fig:bar_comparison}
\end{figure}

\begin{figure}[t]
\centering
\includegraphics[width=0.65\linewidth]{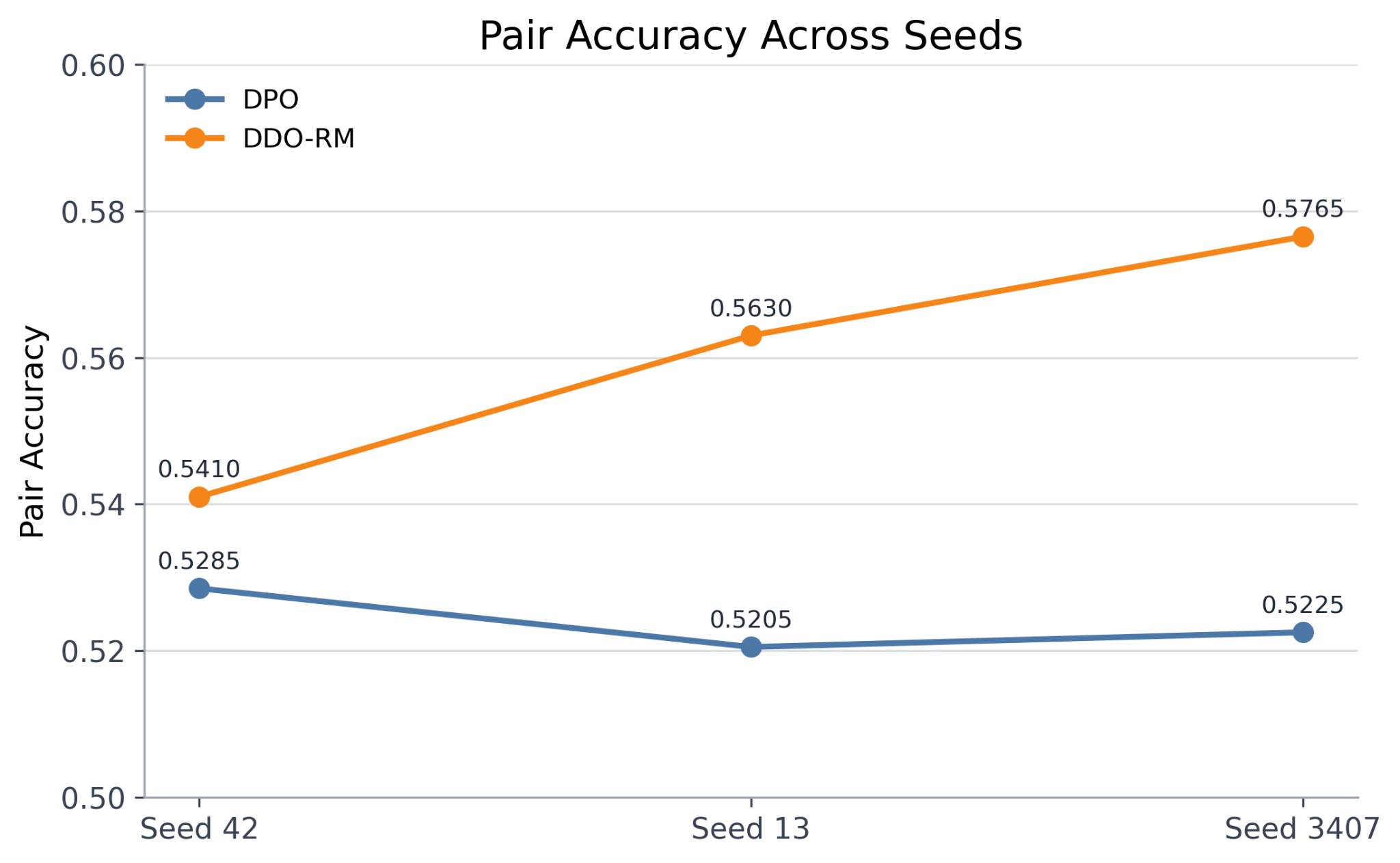}
\caption{Per-seed pair accuracy for DPO and DDO-RM across seeds 42, 13, and 3407. DDO-RM consistently exceeds DPO on every seed, with the gap widening on seeds 13 and 3407.}
\label{fig:seed_lines}
\end{figure}
Table~\ref{tab:results} reports the seed-wise results and the mean across seeds.

\begin{table}[t]
\centering
\caption{Held-out benchmark results on \texttt{test\_prefs}. Higher is better for all metrics.}
\label{tab:results}
\begin{tabular}{llcccc}
\toprule
Metric & Method & Seed 42 & Seed 13 & Seed 3407 & Mean \\
\midrule
\multirow{2}{*}{Pair Accuracy} 
& DPO & 0.5285 & 0.5205 & 0.5225 & 0.5238 \\
& DDO-RM & 0.5410 & 0.5630 & 0.5765 & 0.5602 \\
\midrule
\multirow{2}{*}{AUC}
& DPO & 0.5335 & 0.5301 & 0.5308 & 0.5315 \\
& DDO-RM & 0.5335 & 0.5388 & 0.5423 & 0.5382 \\
\midrule
\multirow{2}{*}{Mean Margin}
& DPO & 0.1308 & 0.1384 & 0.1439 & 0.1377 \\
& DDO-RM & 0.2995 & 0.5196 & 0.7867 & 0.5353 \\
\bottomrule
\end{tabular}
\end{table}

The headline comparison is straightforward.
\begin{itemize}[leftmargin=*]
    \item \textbf{Pair accuracy:} DDO-RM improves the mean from 0.5238 to 0.5602, an absolute gain of 0.0364.
    \item \textbf{AUC:} DDO-RM improves the mean from 0.5315 to 0.5382, an absolute gain of 0.0067.
    \item \textbf{Mean margin:} DDO-RM improves the mean from 0.1377 to 0.5353, an absolute gain of 0.3976.
\end{itemize}

DDO-RM is better than DPO on all three reported seeds for pair accuracy and mean margin. For AUC, the two methods tie on seed 42, while DDO-RM is better on seeds 13 and 3407 and on the overall mean. The margin gap is especially large, which suggests that the reward-guided target not only flips more pairs correctly but also tends to separate preferred and rejected responses more strongly.

\begin{figure}[t]
\centering
\fbox{\begin{minipage}{0.90\linewidth}
\textbf{Summary visualization.} Table~\ref{tab:results} is the authoritative numerical summary. Across three seeds, DDO-RM improves mean pair accuracy, AUC, and mean margin over DPO in this minimal held-out benchmark. The largest absolute gain is on mean margin.
\end{minipage}}
\caption{Compact textual visualization of the current preliminary result. The full numeric values are reported in Table~\ref{tab:results}.}
\label{fig:result_summary_box}
\end{figure}

\takeaway{Experimental takeaway}{The current benchmark is a minimal sanity check, not a final superiority claim. It shows that DDO-RM is viable and improves over DPO in this small held-out setting; the main fair next comparison is PPO-RLHF using the same reward model.}

\paragraph{Interpretation.}
The current results support a narrow statement: within one held-out pairwise preference setting, a reward-model-based distribution update is competitive with a direct pairwise DPO baseline. They do not yet establish general superiority over DPO, PPO-RLHF, or GRPO. In particular, the current study does not include larger models, multiple datasets, confidence intervals, listwise evaluation, or a same-reward-model PPO baseline.

\section{Discussion and Limitations}
\label{sec:discussion}

\subsection{What DDO-RM adds beyond pairwise fitting}

DPO is intentionally pair-centric: it only needs to know which of two responses should rank higher. DDO-RM uses a richer intermediate object, namely a full distribution over candidates. Even when there are only two candidates, this perspective changes the update. Instead of a single binary preference push, DDO-RM forms a reward-guided target that depends on both the current policy and the candidate-level reward landscape. This makes the method especially compatible with reranking, shortlist selection, listwise preference learning, and candidate-based reasoning.

This finite-candidate KL geometry is also the probability-space counterpart of the Plackett--Luce / entropy geometry used in ranking-style mirror descent. In that sense, DDO-RM and RIPLM belong to the same geometric family while serving different roles: RIPLM studies score-space mirror descent for ranking decisions, whereas DDO-RM uses reward-model scores for policy improvement in LLM alignment.

\subsection{Reward-model calibration and robustness}

The same feature that makes DDO-RM appealing also creates a risk. DDO-RM uses reward values to shape a full distribution, so it is more sensitive to reward-model calibration than a method that only uses pairwise order information. Centering the reward under the current policy and applying a KL-regularized projection mitigate this issue, but do not eliminate it. A complete study should test robustness to reward-model noise, scaling, and miscalibration.

This also limits how the RLHF--DPO dichotomy should be interpreted in our paper. That theory identifies regimes where reward-model-first learning can be statistically favorable, but also regimes where reward mis-specification can hurt. DDO-RM inherits both sides of this tradeoff: it may benefit when the reward model is locally consistent over the candidate set, but it can amplify errors if reward scores are poorly calibrated across candidates. This is why our main empirical roadmap emphasizes same-reward-model comparisons and reward-noise stress tests.

\subsection{Fair comparison roadmap}

The current DPO comparison is useful but incomplete. The next experiments should separate three questions.
\begin{enumerate}[leftmargin=*]
    \item \textbf{DDO-Direct versus DPO-family methods.} If no separate reward model is used, a direct distribution-level method should be compared against DPO, ORPO, KTO, and SimPO.
    \item \textbf{DDO-RM versus PPO-RLHF.} If a reward model is used, the most direct comparator is PPO-based RLHF with the same reward model, same prompts, and matched rollout budget.
    \item \textbf{DDO-RM versus GRPO.} In multi-completion or reasoning settings, DDO-RM should be compared against GRPO and related group-relative reward methods.
\end{enumerate}
The original UltraFeedback setting with multiple candidate completions per prompt is especially important, because DDO-RM is most native when the candidate set has size $K>2$.

\takeaway{Limitation takeaway}{The present evidence is deliberately scoped. The method's strongest empirical test is not another binary DPO table, but a same-reward-model PPO-RLHF comparison and a multi-candidate/listwise evaluation.}

\section{Conclusion}

This revised manuscript repositions DDO-RM as distribution-level policy improvement after reward learning. The key conceptual point is that recent theory on the RLHF--DPO dichotomy gives a reason to revisit the reward-model-first route: reward learning can be statistically easier than direct policy fitting in some finite-sample regimes, although that conclusion depends on representation and misspecification assumptions. DDO-RM then asks how to use a learned reward model once it is available.

The proposed answer is an explicit finite-candidate mirror-descent update. Given policy scores and reward-model scores over a candidate set, DDO-RM constructs a KL-regularized reward-improved target distribution and distills it back into the policy. The current Pythia-410M / UltraFeedback Binarized results against DPO are a useful preliminary signal, but they should be interpreted as a minimal reference point rather than a final empirical claim.

The most important next step is a same-reward-model comparison against PPO-RLHF, followed by a multi-candidate comparison against GRPO and listwise baselines. That is the appropriate empirical setting for testing whether explicit distribution-level policy improvement can outperform stochastic or pairwise approximations.


\begin{thebibliography}{99}

\bibitem[Biderman et~al.(2023)]{pythia}
Stella Biderman, Sid Black, Laria Reynolds, et~al.
\newblock Pythia: A suite for analyzing large language models across training and scaling.
\newblock \emph{arXiv preprint arXiv:2304.01373}, 2023.

\bibitem[HuggingFaceH4(2024)]{ufbinarized}
HuggingFaceH4.
\newblock \texttt{ultrafeedback\_binarized} dataset card.
\newblock \url{https://huggingface.co/datasets/HuggingFaceH4/ultrafeedback_binarized}, accessed 2026.

\bibitem[Rafailov et~al.(2023)]{dpo}
Rafael Rafailov, Archit Sharma, Eric Mitchell, Stefano Ermon, Christopher D. Manning, and Chelsea Finn.
\newblock Direct preference optimization: Your language model is secretly a reward model.
\newblock \emph{Advances in Neural Information Processing Systems}, 2023.

\bibitem[Schulman et~al.(2017)]{ppo}
John Schulman, Filip Wolski, Prafulla Dhariwal, Alec Radford, and Oleg Klimov.
\newblock Proximal policy optimization algorithms.
\newblock \emph{arXiv preprint arXiv:1707.06347}, 2017.

\bibitem[Shao et~al.(2024)]{grpo}
Zhihong Shao, Peiyi Wang, Qihao Zhu, Runxin Xu, Junxiao Song, Mingchuan Zhang, Y.~K. Li, Y.~Wu, and Daya Guo.
\newblock DeepSeekMath: Pushing the limits of mathematical reasoning in open language models.
\newblock \emph{arXiv preprint arXiv:2402.03300}, 2024.


\bibitem[Beck and Teboulle(2003)]{beck2003mirror}
Amir Beck and Marc Teboulle.
\newblock Mirror-descent and nonlinear projected subgradient methods for convex optimization.
\newblock \emph{Operations Research Letters}, 31(3):167--175, 2003.

\bibitem[Ethayarajh et~al.(2024)]{kto}
Kawin Ethayarajh, Winnie Xu, Niklas Muennighoff, Dan Jurafsky, and Douwe Kiela.
\newblock KTO: Model alignment as prospect theoretic optimization.
\newblock \emph{arXiv preprint arXiv:2402.01306}, 2024.

\bibitem[Hong et~al.(2024)]{orpo}
Jiwoo Hong, Noah Lee, and James Thorne.
\newblock ORPO: Monolithic preference optimization without reference model.
\newblock \emph{arXiv preprint arXiv:2403.07691}, 2024.

\bibitem[Kivinen and Warmuth(1997)]{kivinen1997exponentiated}
Jyrki Kivinen and Manfred K. Warmuth.
\newblock Exponentiated gradient versus gradient descent for linear predictors.
\newblock \emph{Information and Computation}, 132(1):1--63, 1997.

\bibitem[Meng et~al.(2024)]{simpo}
Yu Meng, Mengzhou Xia, and Danqi Chen.
\newblock SimPO: Simple preference optimization with a reference-free reward.
\newblock \emph{arXiv preprint arXiv:2405.14734}, 2024.

\bibitem[Nemirovsky and Yudin(1983)]{nemirovsky1983problem}
Arkadi Nemirovsky and David Yudin.
\newblock \emph{Problem Complexity and Method Efficiency in Optimization}.
\newblock Wiley, 1983.

\bibitem[Ouyang et~al.(2022)]{ouyang2022training}
Long Ouyang, Jeffrey Wu, Xu Jiang, et~al.
\newblock Training language models to follow instructions with human feedback.
\newblock \emph{Advances in Neural Information Processing Systems}, 2022.

\bibitem[Shi et~al.(2025)]{rlhfdpo}
Ruizhe Shi, Minhak Song, Runlong Zhou, Zihan Zhang, Maryam Fazel, and Simon S. Du.
\newblock Understanding the performance gap in preference learning: A dichotomy of RLHF and DPO.
\newblock \emph{arXiv preprint arXiv:2505.19770}, 2025.


\bibitem[Zhang et~al.(2026)]{ddormrepo}
Tiantian Zhang, Jierui Zuo, and Siyu Lin.
\newblock DDO-RM LLM preference benchmark.
\newblock GitHub repository. \url{https://github.com/zuojr/ddorm-llm-preference-benchmark}, accessed 2026.

\end{thebibliography}
\end{document}